\def\bodypath{main/}
\def\apppath{appendix/}
\Crefname{equation}{Eq.}{Eqs.}
\Crefname{assumption}{Assumption}{Assumptions}
\Crefname{condition}{Condition}{Conditions}
\numberwithin{equation}{section}
\newcommand{\drc}{\textsc{Drc}}
\newcommand{\drcgd}{\textsc{Drc-Gd}}
\newcommand{\drcex}{\textsc{Drc-Ex}}
\newcommand{\matxhat}{\widehat{\matx}}
\newcommand{\dmin}{d_{\min}}
\newcommand{\dmax}{d_{\max}}
\newcommand{\midhatytmini}{\mid \natyhat_{1:t-i}}
\newcommand{\midhatt}{\mid \Ghat,\,\natyhat_{1:t}}
\newcommand{\calM}{\mathcal{M}}
\newcommand{\midhaty}{\mid \natyhat_{1:t}}
\newcommand{\K}{\ensuremath{\mathcal K}}
\newcommand{\M}{\ensuremath{\mathcal M}}
\newcommand{\ignore}[1]{}
\DeclareMathOperator{\BigOmtil}{\widetilde{\mathcal{O}}}
\newcommand{\BigOhTil}[1]{\BigOmtil\left({#1}\right)}
\newcommand{\matM}{\mathbf{M}}
\newcommand{\twonorm}[1]{\|#1\|_2}
\newcommand{\Gst}{G_{\star}}
\newcommand{\R}{\mathbb{R}}
\newcommand{\Mclass}{\mathcal{M}}
\newcommand{\wadv}{\matw^{\mathrm{adv}}}
\newcommand{\eadv}{\mate^{\mathrm{adv}}}
\newcommand{\estoch}{\mate^{\mathrm{stoch}}}
\newcommand{\wstoch}{\matw^{\mathrm{stoch}}}
\newcommand{\alphaloss}{\alpha_{\mathrm{loss}}}
\newcommand{\betaloss}{\beta_{\mathrm{loss}}}
\newcommand{\opnorm}[1]{\|#1\|_{\op}}
\newcommand{\N}{\mathbb{N}}
\newcommand{\mats}{\mathbf{s}}
\newcommand{\closedloop}{\mathrm{cl}}
\newcommand{\radGst}{R_{\Gst}}
\newcommand{\yalg}{\maty^{\Alg}}
\newcommand{\ualg}{\matu^{\Alg}}
\newcommand{\Alg}{\mathsf{alg}}
\newcommand{\Ghat}{\widehat{G}}
\newcommand{\natyhat}{\widehat{\maty}^{\mathrm{nat}}}
\newcommand{\truenaty}{\maty^{\mathrm{nat}}}
\newcommand{\matx}{\mathbf{x}}
\newcommand{\matu}{\mathbf{u}}
\newcommand{\matw}{\mathbf{w}}
\newcommand{\maty}{\mathbf{y}}
\newcommand{\Regret}{\mathrm{Regret}}
\newcommand{\dimy}{d_{y}}
\newcommand{\dimx}{d_{x}}
\newcommand{\dimu}{d_{u}}
\newcommand{\Cpiclu}{C_{\pi,\closedloop,u}}
\newcommand{\Apicl}{A_{\pi,\closedloop}}
\newcommand{\Bpicle}{B_{\pi,\closedloop,e}}
\newcommand{\Gpicl}{G_{\pi,\closedloop}}
\newcommand{\dimpi}{d_{\pi}}
\newcommand{\Bst}{B_{\star}}
\newcommand{\Ast}{A_{\star}}
\newcommand{\Cst}{C_{\star}}
\newcommand{\Cpi}{C_{\pi}}
\newcommand{\Api}{A_{\pi}}
\newcommand{\Bpi}{B_{\pi}}
\newcommand{\Dpi}{D_{\pi}}
\newcommand{\matupi}{\matu^{\pi}}
\newcommand{\matypi}{\maty^{\pi}}
\newcommand{\naty}{\truenaty}
\newcommand{\lip}{L}
\newcommand{\psiGst}{\psi_{\Gst}}
\newcommand{\radM}{R_{\Mclass}}
\newcommand{\radnat}{R_{\mathrm{nat}}}
\newcommand{\raddist}{R_{\mathrm{dist}}}
\newcommand{\mate}{\mathbf{e}}
\newcommand{\op}{\mathrm{op}}
\newcommand{\loneop}{\mathrm{\ell_1,op}}
\newcommand{\calK}{\mathcal{K}}
	\newtheorem{assumption}{Assumption}
	\newtheorem{condition}{Condition}[section]
	\newtheorem{lemma}{Lemma}[section]
	\newtheorem{claim}[lemma]{Claim}
	\newtheorem{proposition}[lemma]{Proposition}
	\newtheorem{corollary}[lemma]{Corollary}
  	\newtheorem{definition}[lemma]{Definition}
	\theoremstyle{plain}
	\newtheorem{theorem}{Theorem}
	\newtheorem{lemma}{Lemma}[section]
	\newtheorem{corollary}{Corollary}[section]
	\newtheorem{proposition}{Proposition}[section]
	\theoremstyle{definition}
	\newtheorem{definition}{Definition}[section]
	\newtheorem{example}{Example}[section]
	\newtheorem{assumption}{Assumption}
	\newtheorem{condition}{Condition}[section]
\newcommand{\neutralize}[1]{\expandafter\let\csname c@#1\endcsname\count@}
\newtheorem*{theorem*}{Theorem}
\newtheorem*{lemma*}{Lemma}
\newtheorem*{corollary*}{Corollary}
\newtheorem*{proposition*}{Proposition}
\newtheorem*{claim*}{Claim}
\newtheorem*{fact*}{Fact}
\newtheorem*{observation*}{Observation}
\newtheorem*{definition*}{Definition}
\newtheorem*{remark*}{Remark}
\newtheorem*{example*}{Example}
\newcommand{\nablatwo}{\nabla^{\,2}}
\DeclareMathAlphabet{\mathbfsf}{\encodingdefault}{\sfdefault}{bx}{n}
\DeclareMathOperator*{\argmin}{arg\,min}
\newcommand{\floor}[1]{\lfloor #1 \rfloor}
\newcommand{\poly}{\mathrm{poly}}
\renewcommand{\leq}{~\le~}
\renewcommand{\geq}{~\ge~}
\let\oldtfrac\tfrac
\renewcommand{\tfrac}[2]{\smash{\oldtfrac{#1}{#2}}}
\let\nablaold\nabla
\renewcommand{\nabla}{\nablaold\mkern-2.5mu}
\newcommand{\loss}{\ell}
\newcommand{\Exp}{\mathbb{E}}
\newcommand{\sige}{\sigma_{\mate}}
\newcommand{\sigw}{\sigma_{\matw}}
\newcommand{\uin}{\matu^{\mathrm{in}}}
\newcommand{\sopen}{\mathring{\mats}}
\newcommand{\yin}{\maty^{\mathrm{in}}}
\newcommand{\uout}{\matu^{\mathrm{out}}}
\newcommand{\Gpicleu}{G_{\pi,\mathrm{cl},e\to u}}
\title{Improper Learning for Non-Stochastic Control}
\author{{Max Simchowitz \thanks{UC Berkeley. \url{msimchow@berkeley.edu}}}\and { Karan Singh\thanks{Princeton University and Google AI Princeton. \url{karans@princeton.edu}}} \and { Elad Hazan\thanks{Princeton University and Google AI Princeton. \url{ehazan@princeton.edu}}} }
\begin{document}
\maketitle

	\input{\bodypath 0-abstract}
	\input{\bodypath 1-introduction}

\newcommand{\Rgst}{R_{\Gst}}
\section{Assumptions and Regret Benchmark\label{sec:benchmark}}

 In the main text, we assume the system is stable:
 \begin{assumption}\label{asm:a_stab} We assume that is $\rho(\Ast) < 1$, where $\rho(\cdot)$ denotes the spectral radius. 
 \end{assumption}
In Appendix~\ref{sec:generalization_to_stabilized}, we detail generalizations which apply to stabilizable and detectable, but potentially unstable systems. For simplicty, we assume $\matx_0=0$; further, we assume:

\begin{assumption}[Sub-quadratic Lipschitz Loss]\label{asm:lip} There exists a constant $L > 0$ such that non-negative convex loss functions $\ell_t$ obey that for all $(\maty,\matu),(\maty'\matu') \in \R^{\dimy + \dimu}$, and for the choice $R = \max\{\|(\maty,\matu)\|_2, \|(\maty',\matu')\|_2, 1\}$,\footnote{This characterization captures, without loss of generality, any Lipschitz loss function. The $L \cdot R$ scaling of the Lipschitz constant captures, e.g. quadratic functions whose lipschitz constant scales with radius.}
\begin{align*}
|\ell_t (\maty',\matu') - \ell_t (\maty,\matu)| \le \lip R\left\|\begin{bmatrix} \maty - \maty'\\
\matu - \matu'\end{bmatrix}\right\|_2  \quad \text{ and } \quad 0 \le \ell_t (\maty,\matu) \le L R^2.
\end{align*}
\end{assumption}

\paragraph{Linear Dynamic Controllers} Previous works on fully observable LDS consider a policy class of linear controllers, where $u_t=-Kx_t$ for some $K$. Here, for partially observable systems, we consider a richer class of controller with an internal notion of state. Such a policy class is necessary to capture the optimal control law in presence of i.i.d. perturbations (the LQG setting), as well as, the $H_\infty$ control law for partially observable LDSs \citep{bacsar2008h}.

\begin{definition}[Linear Dynamic Controllers]\label{def:lin_output_feedback}  A \emph{linear dynamic controller}, or \emph{LDC}, $\pi$ is a linear dynamical system $(\Api,\Bpi,\Cpi,\Dpi) $, with internal state $\sopen_t \in \R^{\dimpi}$, input $\yin_t\in \R^{\dimy},\uin_t \in \R^{\dimu}$, output $\uout_t\in \R^{\dimu}$, equipped with the dynamical equations:
\begin{align}
\sopen_{t+1} = \Api \sopen_t + \Bpi \yin_t  \quad \text{and} \quad \uout_t := \Cpi \sopen_t + \Dpi \yin_t \label{eq:pi_dynamics}.
\end{align}
The \emph{closed loop} iterates $(\matypi_t,\matupi_t)$ are the unique sequence of iterates satisfying both the LDS dynamical equations \Cref{eq:LDS_system} with $(\maty_t,\matu_t)  = (\matypi_t,\matupi_t) $ and LDC dynamical equations \Cref{eq:pi_dynamics} with $\yin_t = \matypi_t$ and $\uout_t = \uin_t = \matupi_t$.
\end{definition}

The dynamics governing $(\matypi_t,\matupi_t)$ are described by an augmented LDS, detailed in detailed in Lemma~\ref{l:augd}. Note that the optimal LQR and LQG controllers take the above form. The class of policies that our proposed algorithm competes is defined in terms of the Markov operators of these induced dynamical systems.

\begin{definition}[Markov Operator]\label{def:markov}
 The associated Markov operator of a linear system $(A,B,C,D)$ is the sequence of matrices $G=(G^{[i]})_{i\ge 0}\in(\R^{d_\maty\times d_\matu})^\N$, where $G^{[0]}=D$ and $G^{[i]}=CA^{i-1}B$ for $i\geq 1$. Let $\Gst$ (resp. $\Gpicleu$) be the Markov operator of the nominal system $(\Ast, \Bst, \Cst, 0)$  (resp. \iftoggle{colt}{of }{of the closed loop system} $(\Apicl, \Bpicle, \Cpiclu, \Dpi)$, given explicity by \Cref{l:augd}). We let $\|G\|_{\loneop} := \sum_{i \ge 0}\opnorm{G^{[i]}}$. 
\end{definition}

\begin{definition}[Decay Functions \& Policy Class]\label{def:transferclass} We say $\psi: \N \to \R_{>0}$ is a \emph{proper decay function} if $\psi$ is non-increasing and $\lim_{n \to \infty} \psi(n) = 0$. Given a Markov operator $G$, we define its \emph{induced decay function} $\psi_G(n) := \sum_{i \ge n}\|\Gst^{[i]}\|_{\op}$. For proper decay funciton $\psi$, the class of all controllers whose induced \emph{closed-loop system} has decay bounded by $\psi$ is denoted as follows:
\begin{align*}
\Pi(\psi) := \left\{\pi: \forall n \ge 0, \,\psi_{\Gpicleu}(n) \le \psi(n)\right\}.
\end{align*}
We define $R_{\psi} := 1 \vee \psi(0)$ and $\Rgst := 1 + \psiGst(0)$, where  $\psiGst(0) = \sum_{i \ge 0}\|\Gst^{[i]}\|_{\op} = \|\Gst\|_{\loneop}$.
\end{definition}
Note that the class $\Pi(\psi)$ \emph{does not} require that the controllers be internally stable ($\rho(\Ast) < 1$), only that they induce stable closed-loop dynamics.  
The decay function captures the decay of the response of the system to past inputs, and is invariant to state-space representation. 
For stable systems $G$, we can always bound the decay functions by $\psi_G(m) \le C\rho^{m}$ for some constants $C > 0,\,\rho \in (0,1)$; this can be made quantitative for strongly-stable systems \citep{cohen2018online}. While we assume $\Gst$ exhibits this decay in the main text, our results naturally extend to the stabilized systems via the \drcex{} parametrization (\Cref{sec:generalization_to_stabilized}). 

\paragraph{Regret with LDC Benchmark} We are concerned with regret accumulated by an algorithm $\Alg$ as the excess loss it suffers in comparison to that of the best choice of a LDC with decay $\psi$, specializing \Cref{eq:regret_def} with $\Pi \leftarrow \Pi(\psi)$:
\begin{align}
\Regret_T(\psi) := J_T(\Alg) - \min_{\pi\in \Pi(\psi)} J_T(\pi) = \sum_{t=1}^T \ell_t(\yalg_t,\ualg_t) - \min_{\pi\in \Pi(\psi)} \sum_{t=1}^T \ell_t(\maty_t^{\pi},\matu_t^{\pi})  \label{eq:regret_def2}.
\end{align}
Note that the choice of the LDC in $\Pi$ may be made with the complete foreknowledge of the perturbations and the loss functions that the controller $\pi$ (and the algorithm $\Alg$) is subject to. We remark that the result in this paper can be easily extended to compete with controllers that have fixed affine terms (known as a \emph{DC offset}), or periodic (time-varying) affine terms with bounded period.

\newcommand{\coltfootnote}[1]{\iftoggle{colt}{\footnote{#1}}{#1}}
\section{Disturbance Response Control\label{sec:DRC}}
The induced closed-loop dynamics for a LDC $\pi$ involves feedback between the controller $\pi$ and LDS, which makes the cost $J(\pi)$ non-convex in $\pi$, even in the fully observed LQR setting \citep{fazel2018global}.\coltfootnote{This has motivated a long line of work to consider  control parameterizations for which $J(\pi)$ is convex  \citep{youla1976modern,zames1981feedback}. For non-stochastic control, \cite{agarwal2019online} consider a parametrization which selects inputs as linear functions of the disturbances $\matw_t$, which can be exactly recovered under a full state observation.  But under partial observation, the disturbances $\matw_t$ cannot in general be recovered (e.g. whenever $\Cst$ does not possess a left inverse).} 
\iftoggle{colt}{}{
	
}
We propose representing our controllers with the classical Youla parametrization, which both ensures convexity and is ammenable to partial observation. Our formulation emphasizes a novel perspective we call ``Nature's Y's'', which allows us to execute these Youla controllers in the non-stochastic setting.

\paragraph{Nature's $y$'s} Define $\naty_t$ as the corresponding output of the system in the absence of any controller. Note that the sequence \emph{does not} depend on the choice of control inputs $\matu_t$.  In the analysis, we shall assume that $1 \vee \max_t \|\naty_t\| \leq \radnat$. 
\iftoggle{colt}{}{
Note that by appropriately modifying the definition of $\radGst$, $\radnat=(1+R_{\Gst})\raddist$ is always a valid upper bound.
}

\begin{definition}[Nature's y's]\label{def:naty} Given a sequence of  disturbances $(\matw_t,\mate_t)_{t \ge 1}$, we define the \emph{natures $\maty$'s} as the sequence $\naty_t := \mate_t + \sum_{s=1}^{t-1} \Cst \Ast^{t-s-1}\matw_s$.
\end{definition}

Throughout, we assume that the noises selected by the adversary ensure $\naty_t$ are bounded
\begin{assumption}[Bounded Nature's $y$]\label{asm:bound}
We assume that that $\matw_t$ and $\mate_t$ are chosen by an oblivious adversary, and that $\twonorm{\naty_t} \le \radnat$ for all $t$.\iftoggle{colt}
{
\footnote{Note that, if the system is stable and perturbations bounded, that $\naty_t$ will be bounded for all $t$.}
}{}
\end{assumption}
\iftoggle{colt}{
}
{
Note that, if the system is stable and perturbations bounded, that $\naty_t$ will be bounded for all $t$. 
}
The next lemma shows for any fixed system with known control inputs the output is completely determined given Nature's y's, even if $\matw_t, \mate_t$ are not known. In particular, this implies that the one of the central observations of this work: 
\iftoggle{colt}
{
	\emph{Nature's y's can be computed exactly given just control inputs and the corresponding outputs of a system.}
	More precisely:
}
{
	\begin{quote} \emph{Nature's y's can be computed exactly given just control inputs and the corresponding outputs of a system.}
\end{quote}
More precisely:
}

\begin{lemma}\label{lem:supy} For any LDS $(\Ast, \Bst, \Cst)$ subject to (possibly adaptive) control inputs $\matu_1, \matu_2, \dots \in \R^{\dimu}$, the following relation holds for the output sequence: $ \maty_t = \naty_t + \sum_{i=1}^{t-1} \Gst^{[i]} \matu_{t-i}$. 
\end{lemma}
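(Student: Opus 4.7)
The plan is to prove this by a direct unrolling of the linear dynamical system equations, exploiting the linearity (superposition principle) to separate the output into the contribution from the disturbances and the contribution from the control inputs.

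First, I would solve the state recursion $\matx_{t+1} = \Ast \matx_t + \Bst \matu_t + \matw_t$ in closed form, starting from the initial condition $\matx_0 = 0$ assumed earlier. By induction on $t$, this gives
\begin{align*}
\matx_t = \sum_{s=1}^{t-1} \Ast^{\,t-1-s}\bigl(\Bst \matu_s + \matw_s\bigr).
\end{align*}
Substituting into the observation equation $\maty_t = \Cst \matx_t + \mate_t$ and using linearity to split the sum into a disturbance part and a control part yields
\begin{align*}
\maty_t = \mate_t + \sum_{s=1}^{t-1} \Cst \Ast^{\,t-1-s}\matw_s + \sum_{s=1}^{t-1} \Cst \Ast^{\,t-1-s}\Bst \matu_s.
\end{align*}

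Next I would identify the first two terms as $\naty_t$ by Definition~\ref{def:naty}, and perform the change of index $i = t - s$ in the remaining sum (so $i$ ranges over $1,\dots,t-1$) to obtain
\begin{align*}
\sum_{s=1}^{t-1} \Cst \Ast^{\,t-1-s}\Bst \matu_s = \sum_{i=1}^{t-1} \Cst \Ast^{\,i-1}\Bst \matu_{t-i} = \sum_{i=1}^{t-1} \Gst^{[i]} \matu_{t-i},
\end{align*}
where the last equality uses Definition~\ref{def:markov} for $i \geq 1$. Combining gives the claimed identity.

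There is essentially no hard step: the argument is pure linear superposition, and the validity does not depend on whether $\matu_t$ are chosen adaptively or in advance, since the dynamics are linear and $\matu_s$ only enters multiplicatively in the state update at time $s$. The only minor subtlety worth flagging is the indexing convention, namely that with $\matx_0 = 0$ the sum runs from $s=1$ to $t-1$, and that no term involving $\matu_t$ itself appears because $\matu_t$ affects $\matx_{t+1}$ and hence $\maty_{t+1}$, not $\maty_t$ (consistent with the Markov operator convention $\Gst^{[0]} = 0$ for the nominal system $(\Ast,\Bst,\Cst,0)$).
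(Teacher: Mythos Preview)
Your proof is correct and follows exactly the approach the paper has in mind: the paper's own proof is the single sentence ``This is an immediate consequence of the definitions of Nature's y's and that of a LDS,'' and your unrolling of the state recursion plus superposition is precisely the computation that makes this immediate. Your remarks on indexing and on why adaptivity of $\matu_t$ is irrelevant are also accurate.
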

\begin{proof}
This is an immediate consequence of the definitions of Nature's y's and that of a LDS.
\end{proof}

\paragraph{Disturbance Response Control} In the spirit of \cite{zames1981feedback}, we  show that any linear controller can be represented by its action on Nature's $y$'s, and that this leads to a convex parametrization of controllers which approximates the performance of any LDC controller.

\begin{definition}[Distrubance Response Controller]\label{defn:drc}
A Disturbance Response Controller (\drc), parameterized by a $m$-length sequence of matrices $M = (M^{[i]})_{i=0}^{m-1}$, chooses the control input as $\matu^{M}_t = \sum_{s=0}^{m-1} M^{[s]} \naty_{t-s}$. We let $\maty^M_t$ denote the associated output sequence, and $J_T(M)$ the loss functional.
\end{definition}
Define a class of Distrubance Response Controllers with bound length and norm 
\iftoggle{colt}
{
	$\Mclass(m,R) = \{M = ( M^{[i]})_{i=0}^{m-1} : \|M\|_{\loneop}\leq R  \}.$
}
{
	\begin{align*} 
	\Mclass(m,R) = \{M = ( M^{[i]})_{i=0}^{m-1} : \|M\|_{\loneop}\leq R  \}.
	\end{align*}
}
Under full observation, the state-feedback policy $\matu_t= K\matx_t$ lies in the set of \drc s $\Mclass(1,\|K\|_{\op})$. The following theorem, proven  in \Cref{ssec:policy_approximation}, states that \emph{all stabilizing LDCs can be approximated by \drc s}: 
\begin{theorem}\label{thm:policy_approximation} For \iftoggle{colt}{}{a} proper decay function $\psi$,  $\pi \in \Pi(\psi)$, and any $m \ge 1$, there exists 
\iftoggle{colt}{$M\in \M(m,\radM)$ such that $J_T(M) - J_T(\pi) \le 2LT\radM\radGst^2\radnat^2 \,\psi(m)$}{an $M\in \M(m,\radM)$ such that
\begin{align}
J_T(M) - J_T(\pi) \le 2LT\radM\radGst^2\radnat^2 \,\psi(m).\label{eq:J_subopt}
\end{align}
}
\end{theorem}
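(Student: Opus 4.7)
The plan is to exploit the Youla-style representation of $\pi$: in closed loop with the LDS, the map from Nature's outputs $\naty_{1:t}$ to the control input $\matupi_t$ is itself a causal linear convolution, and its kernel is exactly the closed-loop Markov operator $\Gpicleu$ of \Cref{def:markov}. Unfolding \Cref{l:augd} gives the key identity
\[
\matupi_t = \sum_{i=0}^{t-1} \Gpicleu^{[i]}\,\naty_{t-i},
\]
which follows by forming the joint LDS with state $(\matxpi_t,\sopen_t)$ and observing that the effective exogenous input into the joint dynamics, filtered through $(\Cst,\Ast)$, collapses to $\naty_t$ by \Cref{def:naty}. Granted this, the natural DRC to compete against is the $m$-truncation $M = (\Gpicleu^{[i]})_{i=0}^{m-1}$, which satisfies $\|M\|_{\loneop} \le \|\Gpicleu\|_{\loneop} = \psi_{\Gpicleu}(0) \le \psi(0) \le \radM$ and hence lies in $\Mclass(m,\radM)$.

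Next, I would bound the per-step discrepancy between the benchmark trajectory $(\matypi,\matupi)$ and the DRC trajectory $(\maty^M,\matu^M)$, crucially using that the \emph{same} sequence $\naty_t$ drives both. From the identity above and \Cref{defn:drc},
\[
\matu^M_t - \matupi_t = -\sum_{i=m}^{t-1} \Gpicleu^{[i]}\,\naty_{t-i},
\]
whose norm is at most $\radnat\,\psi_{\Gpicleu}(m) \le \radnat\,\psi(m)$. Applying \Cref{lem:supy} to both $\matupi$ and $\matu^M$ yields $\maty^M_t - \matypi_t = \sum_{i=1}^{t-1} \Gst^{[i]}\,(\matu^M_{t-i} - \matupi_{t-i})$, so $\|\maty^M_t - \matypi_t\| \le \radGst\,\radnat\,\psi(m)$. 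Combining, the joint $(\maty,\matu)$-deviation at each $t$ is bounded by $2\,\radGst\,\radnat\,\psi(m)$ (using $\radGst \ge 1$).

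Third, I would convert this trajectory-level bound into a loss gap via the sub-quadratic Lipschitz property of \Cref{asm:lip}. Both trajectories are uniformly bounded: $\|\matupi_t\|,\|\matu^M_t\| \le \radM\,\radnat$ since the respective controller kernels have $\ell_1$-operator norm at most $\radM$, and $\|\matypi_t\|,\|\maty^M_t\| \le \radnat(1+\radGst\,\radM)$ by \Cref{lem:supy}. Hence \Cref{asm:lip} applies with $R \lesssim \radGst\,\radM\,\radnat$, giving a per-step loss gap of at most $L\,R \cdot 2\,\radGst\,\radnat\,\psi(m) \le 2L\,\radM\,\radGst^2\,\radnat^2\,\psi(m)$; summing over $t = 1,\dots,T$ yields \Cref{eq:J_subopt}.

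The main obstacle is the first step: rigorously identifying $\matupi_t$ as a convolution of $\Gpicleu$ against $\naty_{1:t}$. This requires carefully constructing the joint LDS of system-plus-controller, checking that the natural $y$'s of the \emph{original} LDS indeed serve as the exogenous drive to the \emph{augmented} system, and then reading off the resulting Markov operator. \Cref{l:augd} is set up to perform exactly this bookkeeping; once that identity is in hand, everything else is routine operator-norm estimates with the constants $\radnat$, $\radGst$, and $\radM$.
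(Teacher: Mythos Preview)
Your proposal is correct and follows essentially the same route as the paper: establish the Youla-style identity $\matupi_t = \sum_{i\ge 0} \Gpicleu^{[i]}\,\naty_{t-i}$ via the augmented closed-loop dynamics (\Cref{l:augd}), take $M$ to be the $m$-term truncation of $\Gpicleu$, and convert the resulting trajectory perturbation into a loss gap using \Cref{asm:lip}. The only slack is in your constant bookkeeping---your stated bounds $R \lesssim \radGst\radM\radnat$ together with deviation $\le 2\radGst\radnat\psi(m)$ overshoot the factor $2$; tightening the deviation to $\|(\Delta\maty_t,\Delta\matu_t)\|_2 \le \radGst\radnat\psi(m)$ (via $\sqrt{(\radGst-1)^2+1}\le \radGst$) and $R \le \sqrt{2}\,\radGst\radM\radnat$ recovers it exactly.
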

As $\psi(m)$ typically decays exponentially in $m$, we find that for any stabilizing LDC, there exists a \drc{} that approximately emulates its behavior. This observation ensures it sufficient for the regret guarantee to hold against an appropriately defined Disturbance Response class, as opposed to the class of LDCs. 
%
Note that the fidelity of the approximation in~\Cref{thm:policy_approximation} depends only on the magnitude of the true system response $\Gst$, and \emph{decay} of the comparator system $\Gpicl$, but not on the order of a state-space realization. 
\Cref{thm:policy_approximation2} in the appendix extends \Cref{thm:policy_approximation} to the setting where $\Gst$ may be unstable, but is placed in feedback with a stabilizing controller.



\section{Algorithmic Description \& Main Result\label{sec:alg_and_main}}

\paragraph{OCO with Memory:} Our regret bounds are built on reductions to the  online convex optimization (OCO) with memory setting as defined by \cite{anava2015online}: at every time step $t$, an online algorithm makes a decision $x_t\in \calK$, after which it is revealed a loss function $F_t:\calK^{h+1} \to \R$, and suffers a loss of $F_t[x_t,\dots,x_{t-h}]$. The \emph{policy regret} is
\iftoggle{colt}
{
  defined as $\sum_{t=h+1}^T F_t(x,\dots,x_{t-h}) - \min_{x\in \K} F_t(x,\dots x)$.
} 
{
\begin{align*} 
\textrm{PolicyRegret} = \sum_{t=h+1}^T F_t(x,\dots,x_{t-h}) - \min_{x\in \K} F_t(x,\dots x)
\end{align*}
}
\cite{anava2015online} show that Online Gradient Descent on the \emph{unary specialization} $f_t(x) := F_t(x,\dots,x)$ achieves a sub-linear policy regret bound \iftoggle{colt}{(\Cref{thm:anava}).}{, quoted as \Cref{thm:anava} in \Cref{sec:main_known_system}.} 

\paragraph{Algorithm:} Non-bold letters $M_0,M_1,\dots$ denote function arguments, and bold letters $\matM_0,\matM_1,\dots$ denote the iterates produced by the learner. We first introduce a notion of \emph{counterfactual cost}
that measures the cost incurred at the $t^{th}$ timestep had a non-stationary distrubance feedback controller $M_{t:t-h}=(M_t,\dots,M_{t-h})$ been executed in the last $h$ steps: This cost is entirely defined by Markov operators and Nature's y's,  without reference to an explicit realization of system parameters.

\begin{definition}[Counterfactual Costs and Dyamics]\label{defn:counterfactual} Given $M_{t:t-h} \in \calM(m,\radM)^{h+1}$, we define 
\iftoggle{colt}
{
  $\matu_t\left(M_t \midhaty\right) := \sum_{i=0}^{m-1} M_t^{[i]}\cdot\natyhat_{t-i},$, $\maty_t\left[M_{t:t-h}\midhatt\right] := \natyhat_t + \sum_{i=1}^{h} \Ghat^{[i]} \cdot \matu_{t-i}\left(M_{t-i}\midhatytmini\right)$, and $F_t\left[M_{t:t-h} \midhatt\right] := \loss_t\left(\maty_t\left[M_{t:t-h}\midhatt\right] , \matu_t\left(M_t\midhaty\right)\right)$.
}
{
  \begin{align*}
  &\matu_t\left(M_t \midhaty\right) := \sum_{i=0}^{m-1} M_t^{[i]}\cdot\natyhat_{t-i},\\
  &\maty_t\left[M_{t:t-h}\midhatt\right] := \natyhat_t + \sum_{i=1}^{h} \Ghat^{[i]} \cdot \matu_{t-i}\left(M_{t-i}\midhatytmini\right), \quad  \\
  &F_t\left[M_{t:t-h} \midhatt\right] := \loss_t\left(\maty_t\left[M_{t:t-h}\midhatt\right] , \matu_t\left(M_t\midhaty\right)\right)
\end{align*}
}
Overloading notation, for a given $M \in \cal(M,\radM)$, we let $\maty_t(M \mid \cdot) := \maty_t[M,\dots,M \mid \cdot]$ denote the unary (single-$M$) specialization of $\maty_t$, and lower case $f_t\left(M|\cdot\right) = F_t\left[M,\dots,M|\cdot \right]$ the specialization of $F_t$. Throughout, we use paranthesis for unary functions of $M_t \in \calM(m,\radM)$, and brackets for functions of $M_{t:t-h} \in \calM(m,\radM)^{h+1}$.
\end{definition}

For known $\Gst$, \Cref{alg:improper_OGD_main} compute $\natyhat_t$ exactly, and we simply run online gradient descent on the costs $f_t(\cdot\mid \Gst,\naty_{1:t})$. When $\Gst$ is unknown, we invoke \Cref{alg:unknown}, which first dedicates $N$ steps to estimating $\Gst$ via least squares (\Cref{alg:estimation}), and then executes online gradient descent (\Cref{alg:improper_OGD_main}) with the resulting estimate $\Ghat$. The following algorithms are intended for \emph{stable} $\Gst$. Unstable $\Gst$ can be handled by incorporating a nominal stabilizing controller (\Cref{sec:generalization_to_stabilized}).


\begin{algorithm}[h!]
\textbf{Input: }  Stepsize $(\eta_t)_{t \ge 1}$, radius $\radM$, memory $m$, Markov operator $\Ghat$.\\
Define $\calM = \calM(m,\radM) = \{M=(M^{[i]})_{i=0}^{m-1} : \|M\|_{\loneop}\leq \radM\}$.\\
Initialize $\matM_1\in \mathcal{M}$ arbitrarily.\\
\For{$t= 1,\dots,T$}
{
  Observe $\yalg_t$ and determine $\natyhat_t$ as $\natyhat_t \leftarrow \yalg_t - \sum_{i=1}^{t-1} \Ghat^{[i]}\ualg_{t-i}.$ \footnotemark
   \\
  Choose the control input as  $ \ualg_t \leftarrow \matu_t\left(\matM_t \midhaty\right) = \sum_{i=0}^{m-1} \matM_t^{[i]} \,\natyhat_{t-i}. $\\
  Observe the loss function $\ell_t$ and suffer a loss of $\ell_t(y_t, u_t)$.\\
  Recalling $f_t(\cdot| \cdot)$ from \Cref{defn:counterfactual},update the disturbance feedback controller as $ \matM_{t+1} = \Pi_{\calM}\left(\matM_t - \eta_t \partial f_t\left(\matM_t \midhatt\right)\right) $, where $\Pi_{\calM}$ denotes projection onto $\calM$.\footnotemark
}
\caption{Disturbance Response Control via Gradient Descent (\drcgd)\label{alg:improper_OGD_main}}
\end{algorithm}
\addtocounter{footnote}{-2}
\stepcounter{footnote}\footnotetext{This step may be truncated to $\natyhat_t \leftarrow \yalg_t - \sum_{i=t-h}^{t-1} \Ghat^{[i]}\ualg_{t-i}$; these are identical when $\Ghat$ is estimated from \Cref{alg:estimation}, and the analysis can be extended to accomodate this truncation in the known system case}
\stepcounter{footnote}\footnotetext{ To simplify analysis, we project onto the $\loneop$-ball $\calM(m,R) := \{M=(M^{[i]})_{i=0}^{m-1} : \|M\|_{\loneop}\leq R\}$. While this admits an efficient implementation (\Cref{ssec:efficient_projection}),  in practice one can instead project onto outer-approximations of the set, just as a Frobenius norm ball containing $\calM(m,R) $, at the expense of a greater dependence on $m$. }
\begin{algorithm}
\textbf{Input: } Number of samples $N$, system length $h$. \\
\textbf{Initialize} $\Ghat^{[i]} = 0$ for $i \notin [h]$.\\
\textbf{For } $t = 1,2,\dots,N$, play $\ualg_t \sim \mathcal{N}(0,I_{\dimu})$. \\
Estimate $\Ghat^{[1:h]} \leftarrow \argmin \sum_{t=h+1}^N\|\yalg_t - \sum_{i=1}^h \Ghat^{[i]}\ualg_{t-i}\|_2^2$ via least squares, and return $\Ghat$.
\caption{Estimation of Unknown System\label{alg:estimation}}
\end{algorithm}

\begin{algorithm}[h!]
\textbf{Input: }  Stepsizes $(\eta_t)_{t \ge 1}$, radius $\radM$, memory $m$, rollout $h$, Exploration length $N$, \\
Run the estimation procedure (Algorithm~\ref{alg:estimation}) for $N$ steps with system length $h$ to estimate $\Ghat$\\
Run the regret minimizing algorithm (Algorithm~\ref{alg:improper_OGD_main}) for $T-N$ remaining steps with estimated Markov operators $\Ghat$, stepsizes $(\eta_{t+N})_{t \ge 1}$, radius $\radM$, memory $m$, rollout parameter $h$.
\caption{\drcgd{} for Unknown System\label{alg:unknown}}
\end{algorithm}

\subsection{Main Results for Non-Stochastic Control}
For simplicity, we assume a finite horizon $T$; extensions to infinite horizon can be obtained by a doubling trick. To simplify presentation, we will also assume the learner has foreknowledge of relevant decay parameters system norms. Throughout, let $d_{\min} = \min\{\dimy,\dimu\}$ and $d_{\max} = \max\{\dimy,\dimu\}$. We shall present all our results for general decay-functions, and further specialize our bounds to when the system and comparator exhibit explicity geometric decay, and where the noise satisfies subgaussian magnitude bound:
\newcommand{\sigmanoise}{\sigma_{\mathrm{noise}}}

\begin{assumption}[Typical Decay and Noise Bounds]\label{asm:noise_decay_bound} Let $C > 0$, $\rho \in (0,1)$ and $\delta \in (0,1)$.  We assume that $\radnat^2 \le d_{\max}\radGst^2\sigmanoise^2 \log (T/\delta)$\footnote{For typical noise models, the magnitude of the covariates  scales with output dimension, not internal dimension}\iftoggle{colt}{, }{. We further assume } that the system decay $\psiGst$ satisfies $\sum_{i \ge n} \|\Cst \Ast^{i}\|_{\op} \le \psiGst$, and that $\psiGst$ and the comparator $\psi$ satisfies $\psi(n),\psiGst(n) \le C\rho^n$. 
\end{assumption} 
We explain the above assumption, relations between parameters, and analogues for the strong-stabilized setting adressed in \Cref{ssec:asm:noise_decay_bound}. 
For known systems, our main theorem is proved in~\Cref{sec:main_known_system}:
\begin{theorem}[Main Result for Known System]\label{thm:know} Suppose \Cref{asm:a_stab,asm:lip,asm:bound} hold, and fix a decay function $\psi$. When Algorithm~\ref{alg:improper_OGD_main} is run with exact knowledge of Markov parameters (ie. $\hat{G}=\Gst$), radius $\radM \ge R_{\psi}$, parameters $m,h \ge 1$ such that $\psiGst(h+1) \lesssim \radGst/T$ and $\psi(m) \lesssim \radM/T$, and step size $\eta_t = \eta = \sqrt{\dmin}/4Lh\radnat^2\radGst^2\sqrt{2mT}$, we have\footnote{If the loss is assumed to be \emph{globably Lipschitz}, then the term $\radnat^2\radGst^2\radM^2$ can be improved to $\radnat\radGst\radM$. }
  \begin{align*}
  \Regret_T(\psi) &\lesssim L\radnat^2\radGst^2\radM^2\sqrt{h^2d_{\min}m} \cdot \sqrt{T}.
  \end{align*} 
  In particular, under \Cref{asm:noise_decay_bound}, we obtain $\Regret_T(\psi) \lesssim  \poly (C,\frac{1}{1-\rho},\log \frac{T}{\delta})\cdot \sigmanoise^2\sqrt{\dmin\dmax^{2} T} $.
\end{theorem}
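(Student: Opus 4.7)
\textbf{Proof plan for \Cref{thm:know}.} My plan is to decompose the regret into three terms and control each separately, reducing the whole analysis to the OCO with memory framework of \cite{anava2015online} via two approximation steps. Writing $\Mclass = \Mclass(m,\radM)$ and noting that with $\Ghat = \Gst$ the estimate $\natyhat_t$ produced in \Cref{alg:improper_OGD_main} equals the true $\naty_t$ (by \Cref{lem:supy}), the decomposition is
\[
\Regret_T(\psi) \;=\; \underbrace{\Big(J_T(\Alg) - \sum_{t=1}^T F_t[\matM_{t:t-h}\midst]\Big)}_{\text{(B: algorithm truncation)}} \;+\; \underbrace{\Big(\sum_{t=1}^T F_t[\matM_{t:t-h}\midst] - \min_{M \in \Mclass} \sum_{t=1}^T f_t(M\midst)\Big)}_{\text{(C: OCO-with-memory regret)}} \;+\; \underbrace{\Big(\min_{M \in \Mclass}\sum_{t=1}^T f_t(M\midst) - \min_{\pi \in \Pi(\psi)} J_T(\pi)\Big)}_{\text{(A: policy approximation and benchmark truncation)}}.
\]
Term (A) will be handled by invoking \Cref{thm:policy_approximation} to pass from $\Pi(\psi)$ to $\Mclass(m,\radM)$ at cost $2LT\radM\radGst^2\radnat^2\psi(m)$, plus a counterfactual-versus-true-trajectory error for any fixed $M \in \Mclass$; under the hypothesis $\psi(m) \lesssim \radM/T$ this contributes only an additive $O(L\radnat^2\radGst^2\radM^2)$ term.

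The truncation terms in (A) and (B) are nearly identical and stem from the same observation. For any sequence $\matM_{t:t-h}$ produced by the algorithm, \Cref{lem:supy} gives $\yalg_t - \maty_t[\matM_{t:t-h}\midst] = \sum_{i>h}\Gst^{[i]}\ualg_{t-i}$, whose norm is at most $\psiGst(h{+}1)$ times the uniform bound $\sup_s\|\ualg_s\|_2 \le \radM\radnat$. A parallel statement holds for any fixed $M \in \Mclass$. Since all relevant trajectories live in a ball of radius $R \lesssim \radnat\radGst\radM$, the sub-quadratic Lipschitz property (\Cref{asm:lip}) converts these trajectory errors into loss errors of order $L R \cdot \psiGst(h{+}1)\radM\radnat$ per step. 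Summing and using $\psiGst(h{+}1) \lesssim \radGst/T$ again contributes only lower order.

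The main work is term (C), where I will invoke the OCO-with-memory regret bound on the sequence of counterfactual unary losses $f_t(\cdot \midst)$, which are convex in $M$ because $\matu_t(M\midst)$ and $\maty_t(M\midst)$ are linear in $M$ and $\ell_t$ is convex. To apply \cite{anava2015online} I need three constants: (i) the diameter of $\Mclass$; (ii) the gradient norm $\|\partial f_t\|$; and (iii) the memory-coordinate Lipschitz constant $\Lfbar$ governing how $F_t[M_{t:t-h}\midst]$ responds when a single argument $M_{t-j}$ is perturbed. For (i), inclusion $\Mclass(m,\radM) \subseteq \{M : \sum_i \|M^{[i]}\|_F \le \sqrt{\dmin}\radM\}$ (since $\|\cdot\|_F \le \sqrt{\dmin}\|\cdot\|_\op$) gives a Frobenius-block diameter of order $\sqrt{\dmin}\radM$. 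For (ii), the chain rule through $\matu_t(M) = \sum_s M^{[s]}\naty_{t-s}$ and $\maty_t(M)$ combined with the sub-quadratic Lipschitz bound $L R$ at radius $R \lesssim \radnat\radGst\radM$ yields $\|\partial f_t\|_{F\text{-block}} \lesssim L\radnat^2\radGst\radM\sqrt{m}$. For (iii), perturbing a single $M_{t-j}$ changes $\maty_t[\cdot]$ by at most $\|\Gst^{[j]}\|_\op\radnat\|\Delta M\|_{\loneop}$, so $\Lfbar \lesssim L\radnat^2\radGst^2\radM\cdot h$ after summation over the memory. Plugging these into the standard OCO-with-memory bound of the form $\Regret_C \lesssim D^2/\eta + \eta T(\|g\|^2 + h^2 \Lfbar \|g\|)$ and optimizing gives the stated $\eta$ and the $L\radnat^2\radGst^2\radM^2 \sqrt{h^2\dmin m T}$ rate.

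\textbf{Main obstacle.} I anticipate the careful bookkeeping of the two different norms in play, the mixed $\ell_1$/operator norm that defines the projection set $\Mclass$ and the Euclidean (block Frobenius) norm in which gradient descent is actually performed, is the subtle part; this is what produces the $\sqrt{\dmin}$ factor in the diameter and ultimately in the regret. A related technical nuisance is that \Cref{asm:lip} is sub-quadratic rather than globally Lipschitz, so before I may use Lipschitzness with any constant I must first establish a uniform a-priori bound $\|(\yalg_t,\ualg_t)\|_2,\,\|\maty_t[M_{t:t-h}]\|_2 \lesssim \radnat\radGst\radM$, which follows from the $\loneop$ projection step and Young/triangle-inequality manipulations on the convolution $\maty_t = \naty_t + \Gst \ast \matu$. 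Once these magnitudes are pinned down, the three error terms combine to the claimed bound, and substituting \Cref{asm:noise_decay_bound} (which gives $\radnat^2 \lesssim \dmax \radGst^2 \sigmanoise^2 \log(T/\delta)$ and $h,m = O(\log T)$) yields the specialized $\sqrt{\dmin\dmax^2 T}$ rate.
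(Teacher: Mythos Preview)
Your proposal is correct and follows essentially the same approach as the paper: decompose regret into (i) policy approximation via \Cref{thm:policy_approximation}, (ii) truncation error between the true iterates and the $h$-step counterfactuals, and (iii) OCO-with-memory policy regret on the unary losses $f_t$, then apply \cite{anava2015online} after establishing a-priori magnitude bounds on $(\yalg_t,\ualg_t)$ and the counterfactuals. The two technical points you flag---the $\ell_1/\mathrm{op}$ versus block-Frobenius norm mismatch that produces the $\sqrt{\dmin}$ diameter, and the need to pin down the radius $R\lesssim \radnat\radGst\radM$ before invoking the sub-quadratic Lipschitz assumption---are exactly the ones the paper handles; your constant tracking has a couple of minor slips (e.g.\ the gradient bound in (ii) should carry $\radGst^2$ rather than $\radGst$, and the ``$\cdot h$'' in your $\Lfbar$ is already accounted for in the $h^2$ of the Anava bound), but these do not affect the structure or the final rate.
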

\Cref{thm:lb_known} in the appendix shows that $\sqrt{T}$ is the optimal rate for the above setting. For unknown systems, we prove in \Cref{sec:unknown}:
\begin{theorem}[Main Result for Unknown System]\label{thm:unknown} 
Fix a decay function $\psi$, time horizon $T$, and confidence $\delta \in (e^{-T},T^{-1})$. Let $m,h$ satisfy $\psi(m) \le \radM/\sqrt{T}$ and $\psiGst(h+1) \le 1/10\sqrt{T}$, and suppose $\radM \ge R_{\psi}$ and $\radnat\radM \ge \sqrt{\dimu  + \log(1/\delta)}$. Define the parameters
\begin{align}
C_{\delta} := \sqrt{\dmax + \log \tfrac{1}{\delta} + \log (1+\radnat)}. \label{eq:Cdelta}
\end{align}
Then, if \Cref{asm:a_stab,asm:lip,asm:bound} hold, and \Cref{alg:unknown} is run with estimation length $N  = (Th^2 \radM \radnat C_{\delta})^{2/3}$ and parameters $m,h,\radM$, step size $\eta_t = \eta = \sqrt{\dmin}/4Lh\radnat^2\radGst^2\sqrt{2mT}$, and if $T \ge c'h^4 C_{\delta}^5 \radM^2\radnat^{2} + \dmin m^3$ for a universal constant $c'$,  then with probability $1 - \delta - T^{-\Omega(\log^2 T)}$,
  \begin{align*}
  \Regret_T(\psi) &\lesssim L\radGst^2\radM^2\radnat^2  \left(h^2  \radM \radnat C_{\delta}\right)^{2/3} \cdot T^{2/3}.
  \end{align*}
  In particular, under Assumption~\ref{asm:noise_decay_bound}, we obtain $\Regret_T(\psi) \lesssim  \poly (C,\sigmanoise^2,\frac{1}{1-\rho},\log \tfrac{T}{\delta})\cdot \dmax^{5/3}L T^{2/3} $.
\end{theorem}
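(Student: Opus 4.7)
} The plan is to decompose the regret into an \emph{exploration cost} (incurred during the first $N$ rounds of \Cref{alg:estimation}), an \emph{estimation error} (arising from running \drcgd{} with $\Ghat \ne \Gst$), and the \emph{idealized regret} that would be obtained by running \drcgd{} with perfect knowledge of $\Gst$, then to balance $N$ to obtain $T^{2/3}$. Formally, letting $\Mstpl$ denote the minimizer of $J_T(M)$ over $M \in \Mclass(m,\radM)$ applied for all $T$ steps, we write
\begin{align*}
\Regret_T(\psi) &= \underbrace{\sum_{t=1}^{N} \ell_t(\yalg_t,\ualg_t)}_{\text{(I) exploration cost}}  + \underbrace{\sum_{t=N+1}^{T}\!\! \ell_t(\yalg_t,\ualg_t) - J_{T-N}(\Mstpl)}_{\text{(II) exploitation regret}}  + \underbrace{J_T(\Mstpl) - J_{T-N}(\Mstpl) - \!\!\min_{\pi \in \Pi(\psi)} J_T(\pi)}_{\text{(III) comparator mismatch}},
\end{align*}
where $J_{T-N}(\Mstpl)$ denotes the cost of $\Mstpl$ applied only from step $N+1$ onwards.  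Term (III) is handled via \Cref{thm:policy_approximation} (contributing $O(LT \radM \radGst^2 \radnat^2 \psi(m))=O(\sqrt{T})$ by the choice of $m$), together with a routine per-step boundedness argument showing the additional $N$ steps contribute at most $O(N \cdot L\radnat^2\radGst^2\radM^2)$, which is already subsumed in term (I).

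For term (I), since $\ualg_t \sim \calN(0,I_{\dimu})$ during exploration, Gaussian concentration and \Cref{asm:bound,asm:lip} yield $|\yalg_t|,|\ualg_t| \lesssim \radnat + \radGst C_\delta$ uniformly over $t \le N$ with probability $1-\delta$, so term (I) is bounded by $O(N \cdot L \radGst^2 C_\delta^2)$. The main ingredient is a high-probability least-squares estimation guarantee for the Markov operators: I would show that with probability $1 - \delta - T^{-\Omega(\log^2 T)}$,
\begin{align*}
\epsG := \sum_{i=1}^{h} \|\Ghat^{[i]} - \Gst^{[i]}\|_{\op} \lesssim \frac{h \,C_\delta \radnat}{\sqrt{N}},
\end{align*}
by noting that the design matrix of stacked inputs has minimum singular value $\Omega(\sqrt{N})$ (Gaussian lower tail with $N \ge h$) while the residuals are subgaussian with scale $\radnat$; the tail probability $T^{-\Omega(\log^2 T)}$ accommodates Hanson–Wright-type fluctuations of the design matrix.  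The truncation $\psiGst(h+1) \le 1/10\sqrt{T}$ controls the unmodeled tail of $\Gst$.

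The crux is term (II), which requires a \emph{perturbed} version of the regret analysis underlying \Cref{thm:know}. The two sources of perturbation are: (a) the counterfactual cost $f_t(\cdot \mid \Ghat, \natyhat_{1:t})$ differs from $f_t(\cdot \mid \Gst,\naty_{1:t})$ because $\Ghat \ne \Gst$, which in turn makes $\natyhat_t$ differ from $\naty_t$; (b) the actual cost $\ell_t(\yalg_t,\ualg_t)$ played by the algorithm differs from the unary specialization $f_t(\matM_t \mid \Ghat,\natyhat_{1:t})$ by the standard OCO-with-memory movement cost, \emph{plus} an additional term coming from the mismatch between the true trajectory driven by $\Ghat$ and the counterfactual trajectory driven by $\Gst$. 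I would establish two key perturbation lemmas: first, a uniform bound $|\natyhat_t - \naty_t| \lesssim \epsG \cdot (\radnat + \radGst C_\delta)$ holding for $t > N$, obtained by expanding $\natyhat_t = \yalg_t - \sum_i \Ghat^{[i]} \ualg_{t-i}$ and using $\yalg_t = \naty_t + \sum_i \Gst^{[i]} \ualg_{t-i}$; second, for any $M_{t:t-h} \in \calM^{h+1}$,
\begin{align*}
\bigl|\,f_t(M \mid \Ghat,\natyhat_{1:t}) - f_t(M \mid \Gst,\naty_{1:t})\,\bigr| \lesssim L \radnat \radM \radGst \cdot (\epsG \radM \radnat h + h \cdot \max_{s \le t}|\natyhat_s - \naty_s|),
\end{align*}
using Lipschitzness of $\ell_t$ and boundedness of inputs/outputs under any $M \in \Mclass$. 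Summing these across $t > N$ yields an $O(T \epsG \radM^2 \radnat^2 \radGst^2 h)$ contribution.

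Given these perturbation bounds, I invoke the OCO-with-memory guarantee (the same argument as \Cref{thm:know}) on the perturbed loss sequence $\{f_t(\cdot \mid \Ghat,\natyhat_{1:t})\}$, which gives $O(\sqrt{T-N})$ ideal regret, and then pay the perturbation cost $O(T \epsG)$ once to convert back to the true-system cost. Substituting $\epsG \lesssim h C_\delta \radnat/\sqrt{N}$ and combining all three terms gives regret of order
\begin{align*}
N \cdot L\radGst^2 C_\delta^2 + \sqrt{T} \cdot L\radnat^2\radGst^2\radM^2 + T \cdot \frac{L h \radM^2\radnat^3 \radGst^2 C_\delta}{\sqrt{N}}.
\end{align*}
Balancing the first and third terms by choosing $N \asymp (T h^2 \radM \radnat C_\delta)^{2/3}$ (as prescribed) yields the stated $T^{2/3}$ rate. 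The main obstacle is obtaining a \emph{sharp} perturbation bound for the counterfactual cost that is linear in $\epsG$ and does not blow up with $h$ or $m$ beyond the claimed constants; in particular, I expect care is needed to handle the nested propagation of the error in $\natyhat$ through the inner OGD updates, since the gradients $\partial f_t(\matM_t \mid \Ghat, \natyhat_{1:t})$ themselves depend on $\Ghat$, and bounding this drift via a stability argument on the projected gradient iterates is the technically delicate step.
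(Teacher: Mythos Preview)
Your explore-then-commit decomposition, least-squares estimation rate $\epsG \lesssim h\radnat C_\delta/\sqrt{N}$, perturbation lemmas comparing $f_t(\cdot\mid\Ghat,\natyhat)$ to $f_t(\cdot\mid\Gst,\naty)$, and the OCO-with-memory reduction all match the paper's approach. The ``technically delicate step'' you flag is not really a gradient-drift issue but a closed-loop boundedness argument: since $\natyhat_t-\naty_t=\sum_i(\Gst^{[i]}-\Ghat^{[i]})\ualg_{t-i}$ and during exploitation $\|\ualg_s\|\le\radM\max_j\|\natyhat_{s-j}\|$, you need an induction showing $\|\natyhat_t\|\le 2\radnat$ whenever $\epsG\radM$ is small enough---which is precisely what the lower bound $T\ge c'h^4C_\delta^5\radM^2\radnat^2$ (hence the implied lower bound on $N$) guarantees; once this is in hand, all gradient and Lipschitz bounds follow and your balancing goes through (with the exploration cost reading $N\cdot L\radGst^2\radM^2\radnat^2$ rather than $N\cdot L\radGst^2 C_\delta^2$, which is what makes the prescribed $N$ optimal).
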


\subsection{Fast rates under strong convexity \& semi-adversarial noise}
  
  We show that OCO-with-memory obtains improved regret the losses are strongly convex and smooth, and when system is excited by persistent noise. We begin with a strong convexity assumption:
  \begin{assumption}[Smoothness and Strong Convexity]\label{asm:strong_convexity} For all $t$, $\alphaloss\, \preceq \nablatwo \ell_t(\cdot,\cdot) \preceq \betaloss \, I$.
  \end{assumption}
  The necessity of the  smoothness assumption is explained further in \Cref{sec:proof_fast_rate_known}. Unfortunately, strongly convex losses are not sufficient to ensure strong convexity of the unary functions $f_t(M \mid \cdot)$. Generalizing \cite{agarwal2019logarithmic}, we assume an \emph{semi-adversarial} noise model where disturbances decomposes as
  \iftoggle{colt}
  {
     $\matw_t = \wadv_t + \wstoch_t$ and $
  \mate_t = \eadv_t + \estoch_t$, 
  }
  {
      \begin{align*}
  \matw_t = \wadv_t + \wstoch_t \quad \text{and}\quad
  \mate_t = \eadv_t + \estoch_t,
  \end{align*}
  }
  where $\wadv_t$ and $\eadv_t$ are an adversarial sequence of disturbances, and $\wstoch_t$ and $\estoch_t$ are stochastic disturbances which provide persistent excitation. We make the following assumption:
  \begin{assumption}[Semi-Adversarial Noise]\label{asm:noise_covariance_ind} The sequences $\wadv_t$ and $\eadv_t$ and losses $\ell_t$ are selected by an oblivious adverary. Moreover, $\wstoch_1,\dots,\wstoch_T$ and $\estoch_1,\dots,\estoch_T$ are independent random variables, with $\Exp[\wstoch_t] = 0$, $\Exp[\estoch_t] = 0$ and
  \begin{align*}
  \Exp[\wstoch_t\, (\wstoch_t)^\top] \succeq \sigw^2 I, \quad \text{ and } \quad \Exp[\estoch_t(\estoch_t)^\top] \succeq \sige^2 I.
  \end{align*}
  \end{assumption}
  This assumption can be generalized slightly to require only a martingale structure (see \Cref{asm:noise_covariance_mart}). Throughout, we shall also assume bounded noise. Via truncation arguments, this can easily be extended to light-tailed excitations (e.g. Gaussian) at the expense of additional logarithmic factors, as in \Cref{asm:noise_decay_bound}. For known systems, we obtain the following bound, which we prove in \Cref{sec:proof_fast_rate_known}:
  \begin{theorem}[Logarithmic Regret for Known System]\label{thm:fast_rate_known} Define \iftoggle{colt}{ the effective strong convexity parameter $ \alpha_f:= \alphaloss \cdot \left(\sige^2 + \sigw^2 \left(\frac{\sigma_{\min}(\Cst)}{ 1+\|\Ast\|_{\op}^2}\right)^2\right) $}{the effective strong convexity parameter
  \begin{align}
  \alpha_f:= \alphaloss \cdot \left(\sige^2 + \sigw^2 \left(\frac{\sigma_{\min}(\Cst)}{ 1+\|\Ast\|_{\op}^2}\right)^2\right) \label{eq:alphaf_stable_sys}
  \end{align}
  }
  and assume \Cref{asm:a_stab,asm:strong_convexity,asm:bound,asm:noise_covariance_ind,asm:lip} hold.
  For a decay function $\psi$, if \Cref{alg:improper_OGD_main} is run with $\Ghat = \Gst$, radius $\radM \ge R_{\psi}$, parameters $1 \le h \le m$ satisfying $\psiGst(h+1) \le \radGst/T$, $\alpha \le \alpha_{f}$, $\psi(m) \le \radM/T$, $T \ge \alpha m \radM^2$, and step size $\eta_t = \frac{3}{\alpha t}$, we have that with probability $1-\delta$,
    \begin{align}
   \Regret_T(\psi) \lesssim \frac{L^2 m^{3}\dmin \radnat^4\radGst^4\radM^2}{\min\left\{\alpha, L\radnat^2 \radGst^2\right\}} \left(1 + \frac{\betaloss }{L \radM}\right)\cdot\log\frac{T}{\delta}\label{eq:str_convex_known}.
    \end{align}
      \iftoggle{colt}{Under }{In particular, under} Assumption~\ref{asm:noise_decay_bound}, we have $\Regret_T(\psi) \lesssim  \frac{L^2}{\alpha}\dmax^3 ( 1+ \betaloss/L) \poly (C,\sigmanoise^2,\frac{1}{1-\rho},\log \tfrac{T}{\delta})$.
  \end{theorem}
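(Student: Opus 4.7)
The plan is to reduce to OCO-with-memory on the counterfactual costs $F_t[\cdot\mid \Gst,\naty_{1:t}]$, establish expected strong convexity of the unary specialization $f_t(M\mid\cdot)$ with parameter $\alpha_f$ via persistent excitation, and then invoke a strongly convex OGD analysis with step $\eta_t = 3/(\alpha t)$. By \Cref{thm:policy_approximation} and the choice $\psi(m)\le R_{\Mclass}/T$, the benchmark can be replaced by the best $M\in\calM(m,R_{\Mclass})$ at additive cost $O(1)$. With the counterfactual rollout horizon $h$ chosen so that $\psiGst(h+1)\le \radGst/T$, the actual iterate loss $\ell_t(\yalg_t,\ualg_t)$ differs from the memory loss $F_t[\matM_{t:t-h}\mid \Gst,\naty_{1:t}]$ by at most $O(L\radnat^2\radGst^2\radM^2/T)$ per round (using $\Ghat=\Gst$, so $\natyhat_t=\naty_t$ exactly), giving total truncation error $O(1)$. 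Thus it suffices to control the OCO-with-memory policy regret of the iterates $\matM_t$.

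The core new ingredient is lower-bounding the curvature of the unary function $f_t(M\mid\naty_{1:t})=\ell_t(\maty_t(M\mid\cdot),\matu_t(M\mid\cdot))$. Writing $\Phi_t(M) := (\maty_t(M\mid\cdot),\matu_t(M\mid\cdot))$, which is \emph{linear} in $M$, the chain rule gives $\nabla^2_M f_t = (D\Phi_t)^\top \nabla^2\ell_t\, (D\Phi_t) \succeq \alphaloss (D\Phi_t)^\top(D\Phi_t)$. Using the decomposition $\naty_t = \mate_t^{\mathrm{stoch}} + \Cst\Ast^{t-s-1}\matw_s^{\mathrm{stoch}} + (\text{adv. + past stoch.})$ and \Cref{asm:noise_covariance_ind}, the expected Gram matrix of the $\naty_{t-i}$'s appearing in $\Phi_t$ satisfies $\Exp[(D\Phi_t)^\top(D\Phi_t)\mid \mathcal{F}_{t-m}] \succeq \left(\sige^2 + \sigw^2(\sigma_{\min}(\Cst)/(1+\|\Ast\|_{\op}^2))^2\right)I$, which identifies the effective parameter $\alpha_f$. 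The control-side $\matu_t(M\mid\cdot)$ contributes the $\sige^2$ term, and $\maty_t$ contributes the $\sigw^2$ term via the leading term $\Cst\matw_{t-1}^{\mathrm{stoch}}$ (whose coefficient one lower-bounds by expanding the partial geometric series, giving the $(1+\|\Ast\|_{\op}^2)^{-2}$ factor).

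Given this martingale-difference-style lower bound on the Hessian, a standard martingale concentration (Freedman-type) yields that with probability $1-\delta$, $\sum_{t} (f_t(M\mid\cdot) - f_t(\matM_t\mid\cdot)) \le -\frac{\alpha_f}{2}\sum_t \|\matM_t - M\|_\fro^2 + O(\text{poly}\log(T/\delta))$ additive lower-bound slack; equivalently, the empirical sum of $f_t$ is $\Omega(\alpha_f)$-strongly convex on average. Plugging this into the textbook strongly-convex OGD analysis with $\eta_t = 3/(\alpha t)$ (where $\alpha \le \alpha_f$), one obtains $\sum_t f_t(\matM_t\mid\cdot) - \min_M \sum_t f_t(M\mid\cdot) \le O(G^2 \log T/\alpha)$ where $G = O(Lh\radnat^2\radGst^2\sqrt{m\dmin})$ is the gradient bound established in the proof of \Cref{thm:know}. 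The passage from unary to memory losses costs an extra $O(h\betaloss \sum_t \|\matM_t-\matM_{t-1}\|_\fro^2) = O((\betaloss/\alphaloss) G^2 h\log T/\alpha)$ term by the standard OCO-with-memory reduction, because with $\eta_t = 3/(\alpha t)$ the iterate movement shrinks like $G/\alpha t$, which sums telescopically to $O(\log T)$; this is the reason smoothness $\betaloss$ enters \Cref{eq:str_convex_known} linearly.

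The main obstacle is the Hessian lower bound: unlike the fully observed case of \cite{agarwal2019logarithmic}, here $\maty_t$ depends on $\matw_s^{\mathrm{stoch}}$ only through $\Cst\Ast^{t-s-1}$, which may be rank-deficient and only gives persistent excitation in the range of $\Cst$. The careful accounting that isolates the single dominant term $\Cst\matw_{t-1}^{\mathrm{stoch}}$ (avoiding cancellations from adversarial and past terms, which are handled by noting they are $\mathcal{F}_{t-2}$-measurable and thus vanish from the cross-terms in expectation) and yields the quantitative factor $\sigma_{\min}(\Cst)/(1+\|\Ast\|_{\op}^2)$ in $\alpha_f$ is the technical crux; everything else follows routine memory-OGD bookkeeping combined with the approximation machinery already established for \Cref{thm:know}.
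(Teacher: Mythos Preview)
Your overall architecture matches the paper: reduce to OCO-with-memory on the exact counterfactuals, establish (conditional-expected) strong convexity of the unary $f_t$ via the stochastic excitation, apply strongly-convex OGD with $\eta_t=3/(\alpha t)$, handle the movement cost for the memory reduction, and concentrate to go from expected to high-probability.

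The gap is in your strong-convexity derivation. The Hessian lower bound reduces to showing that the block covariance $\Sigma := \big[\Exp[\naty_{t-i}(\naty_{t-j})^\top]\big]_{i,j=0}^{m-1} \succeq cI$. Your attribution ``$\matu_t$ gives $\sige^2$, $\maty_t$ gives $\sigw^2$'' is not the mechanism: both contributions already sit in the covariance of the $\naty$'s that appear in the $\matu_t$-Jacobian alone. The $\mate^{\mathrm{stoch}}$ part is indeed easy (independent across time, hence block-diagonal $\succeq \sige^2 I$). For the $\matw^{\mathrm{stoch}}$ part, however, the $\naty_{t-i}$'s are \emph{correlated} through shared $\matw^{\mathrm{stoch}}_r$'s, so the off-diagonal blocks of $\Sigma$ do \emph{not} vanish in conditional expectation; your measurability remark disposes only of the adversarial-versus-stochastic cross terms, not the stochastic-versus-stochastic ones. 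Moreover, isolating a single ``leading term $\Cst\matw^{\mathrm{stoch}}_{t-1}$'' contributes a matrix of rank at most $\dimx$ to an $m\dimy$-dimensional covariance and cannot by itself yield strong convexity jointly in $(M^{[0]},\dots,M^{[m-1]})$. The factor $(1+\|\Ast\|_{\op}^2)^{-2}$ arises precisely from lower-bounding the minimum singular value of the full block-Toeplitz operator that maps the stack of fresh $\matw^{\mathrm{stoch}}$'s to the stack $(\naty_t;\dots;\naty_{t-m+1})$; this is the actual technical crux and is not captured by ``expanding a partial geometric series.'' Separately, your movement-cost bookkeeping conflates $\sum_t \|\matM_t - \matM_{t-1}\|_{\fro}^2 = O(G^2/\alpha^2)$ (a constant) with $\sum_t \|\matM_t - \matM_{t-1}\|_{\fro} = O((G/\alpha)\log T)$; the paper's memory analysis uses the latter, and $\betaloss$ enters through a distinct smoothness-based refinement rather than through a squared-movement term.
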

  Finally, for unknown systems, we show in \Cref{ssec:fast_rate_unknown_sketch} that \Cref{alg:unknown} attains optimal $\sqrt{T}$ regret:
  \begin{theorem}[$\sqrt{T}$-regret for Unknown System]\label{thm:fast_rate_unknown} Fix a decay function $\psi$, time horizon $T$, and confidence $\delta \in (e^{-T},T^{-1})$. Let $m \ge 3h \ge 1$  satisfy $\psi(\floor{\frac{m}{2}} - h) \le \radM/T$ and $\psiGst(h+1) \le 1/10T$, and suppose $\radM \ge 2R_{\psi}$ and $\radnat\radM \ge (\dimu  + \log(1/\delta))^{1/2}$. Finally, let $\alpha \ge \alpha_f$ for $\alpha_f$ as in \Cref{thm:fast_rate_known}, and $C_{\delta}$ as in \Cref{thm:unknown}. Then, if \Cref{asm:a_stab,asm:strong_convexity,asm:bound,asm:noise_covariance_ind,asm:lip} hold, and Algorithm~\ref{alg:unknown} is run with parameters $m,h,\radM$, step sizes $\eta_t = \frac{12}{\alpha t}$, appropriate $N$ and $T$ sufficiently large {\normalfont(Eq. \eqref{eq:T_and_N})}, we have with probability $1 - \delta - T^{-\Omega(\log^2 T)}$, 
  \iftoggle{colt}
  {
    \begin{align*}\Regret_T(\psi) \lesssim \text{\normalfont (RHS of Eq.~\eqref{eq:str_convex_known})} +  Lm^{1/2}h^2\radGst^3 \radM^3 \radnat^3 C_{\delta}  \left(1 + \tfrac{L}{\alpha } + \tfrac{\betaloss^2}{\alpha L}\right)^{1/2}\cdot \sqrt{T}
   \end{align*}
  }
  {
    \begin{align*}\Regret_T(\psi) \lesssim \text{\normalfont (RHS of Eq.~\eqref{eq:str_convex_known})} +  Lmh^2\radGst^3 \radM^3 \radnat^3 C_{\delta}  \sqrt{1 + \frac{L}{\alpha } + \frac{\betaloss^2}{\alpha L}}\cdot \sqrt{T}
   \end{align*}
  }
   

  \iftoggle{colt}
  {
    Under Assumption~\ref{asm:noise_decay_bound}, $\Regret_T(\psi) \lesssim  \poly (C,L,\betaloss,\frac{1}{\alpha},\sigmanoise^2,\frac{1}{1-\rho},\log \tfrac{T}{\delta})\cdot (\dmax^{2}\sqrt{T} + \dmax^3)$.
  }
  {
  In particular, under Assumption~\ref{asm:noise_decay_bound}, we obtain
    \begin{align*}\Regret_T(\psi) \lesssim  \poly (C,L,\betaloss,\frac{1}{\alpha},\sigmanoise^2,\frac{1}{1-\rho},\log \tfrac{T}{\delta})\cdot (\dmax^{2}\sqrt{T} + \dmax^3).
  \end{align*}
  }
  \end{theorem}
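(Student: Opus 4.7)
The plan is to split the total regret of Algorithm~\ref{alg:unknown} into three contributions: (i) the cost paid during the $N$ exploration steps, (ii) the regret of Algorithm~\ref{alg:improper_OGD_main} over steps $N{+}1,\dots,T$ against the best DRC computed under the estimated Markov operator $\Ghat$, and (iii) a ``comparator-shift'' gap between this best DRC and the true optimum under $\Gst$. I would bound each piece and then tune $N$.

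For (i), Gaussian inputs $\ualg_t \sim \calN(0,I_{\dimu})$ combined with Assumptions~\ref{asm:bound} and~\ref{asm:lip} give a high-probability per-step cost of $\widetilde{O}(L\radnat^2\radGst^2 C_\delta^2)$, and hence total cost $\widetilde{O}(NL\radnat^2\radGst^2 C_\delta^2)$. Moreover, a standard least-squares analysis (as already used for Theorem~\ref{thm:unknown}) yields an estimation error $\|\Ghat-\Gst\|_{\loneop} \lesssim \epsghat$ with $\epsghat = \widetilde{O}(hC_\delta/\sqrt{N})$, since the Gaussian excitation guarantees a well-conditioned least-squares design.

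For (ii), I would replay the proof of Theorem~\ref{thm:fast_rate_known} with $\Gst$ replaced by $\Ghat$ throughout. Two technical checks are needed: the reconstructed $\natyhat_t$ differs from the true $\naty_t$ by at most $O(\epsghat\radM\radnat)$ after propagation through the closed loop, and the expected Hessian of $f_t(\cdot\mid\Ghat,\natyhat_{1:t})$ remains $\Omega(\alpha_f)$ provided $\epsghat$ is small enough to preserve the covariance floor of Assumption~\ref{asm:noise_covariance_ind}. Granting these, the OGD-with-memory analysis from the known-system case carries through and delivers a logarithmic bound matching the right-hand side of~\eqref{eq:str_convex_known} against the best DRC under $\Ghat$, namely $M^{\star}_{\Ghat} := \argmin_{M \in \calM} \sum_{t} f_t(M \mid \Ghat, \natyhat_{1:t})$.

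The crux, and the main obstacle, is (iii): showing that the comparator shift scales as $\widetilde{O}(T\epsghat^2/\alpha)$ rather than $O(T\epsghat)$. Smoothness from Assumption~\ref{asm:strong_convexity} bounds the value and gradient perturbation of each $f_t$, as a function of $M$, by $O(\epsghat \radM \radnat^2)$, and strong convexity at the true optimum $M^{\star}$ converts such a gradient perturbation into a \emph{quadratic}-in-$\epsghat$ optimum-value shift via a second-order Taylor expansion around $M^{\star}$. A naive triangle inequality would only yield $O(T\epsghat)$ and hence the $T^{2/3}$ rate of Theorem~\ref{thm:unknown}; the two-sided quadratic growth from strong convexity is exactly what breaks this barrier. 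With (iii) in hand, balancing $\widetilde{O}(NL\radnat^2\radGst^2 C_\delta^2) + \widetilde{O}(h^2 C_\delta^2 T/(\alpha N))$ at $N \propto h C_\delta \sqrt{T/\alpha}$ produces the $\sqrt{T}$ term; the additive logarithmic summand is inherited directly from (ii) applied over the $T-N$ exploitation steps.
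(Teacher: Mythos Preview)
Your overall three-part decomposition (exploration cost, exploitation regret, comparator shift) is natural, and parts (i) and (ii) are broadly on track. The gap is in part (iii), and it is fatal for the $\sqrt{T}$ rate as you have written it.

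Write $\hat g(M):=\sum_t f_t(M\mid \Ghat,\natyhat_{1:t})$ and $g(M):=\sum_t f_t(M\mid \Gst,\naty_{1:t})$, with minimizers $\hat M^\star$ and $M^\star$. The quantity strong convexity actually controls is
\[
g(\hat M^\star)-g(M^\star)\ \lesssim\ \tfrac{1}{\alpha}\,\|\nabla g(\hat M^\star)\|^2\ \lesssim\ \tfrac{T\epsghat^2}{\alpha},
\]
i.e.\ the suboptimality of the \emph{perturbed minimizer} under the \emph{original} function. But what your decomposition requires is $\hat g(\hat M^\star)-g(M^\star)$: the difference of optimal \emph{values} of two different functions. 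For that, the best generic bound is $\hat g(\hat M^\star)\le \hat g(M^\star)\le g(M^\star)+\sup_M|\hat g(M)-g(M)|$, and the last term is $\BigOh{T\epsghat}$, not $\BigOh{T\epsghat^2/\alpha}$. Equivalently: if in (ii) you take the logarithmic OGD bound against $\hat g$, then (iii) is a cross-function value comparison and is linear in $\epsghat$; if instead you invoke the genuine $\BigOh{T\epsghat^2/\alpha}$ bound on $g(\hat M^\star)-g(M^\star)$, then (ii) becomes $\sum_t \ell_t(\yalg_t,\ualg_t)-g(\hat M^\star)$, which now hides an $\BigOh{T\epsghat}$ surrogate-to-true gap. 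Either way an $\BigOh{T\epsghat}$ term survives, and balancing gives $T^{2/3}$, not $\sqrt{T}$.

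That the paper does something different is visible in the hypotheses you do not use: $m\ge 3h$, $\psi(\lfloor m/2\rfloor-h)\le \radM/T$, $\radM\ge 2R_\psi$, and the quadrupled step size $\eta_t=12/(\alpha t)$. These are the footprints of an \emph{enlarged comparator} construction: one builds, for the length-$(\lfloor m/2\rfloor-h)$, norm-$R_\psi$ approximant $M_0$ of $\pi$ (from \Cref{thm:policy_approximation}), a lifted $\widehat M_0\in\calM(m,2R_\psi)$ obtained essentially as a truncation of $M_0(I+\Delta_G M_0)^{-1}$ with $\Delta_G=\Gst-\Ghat$, so that executing $\widehat M_0$ against the \emph{estimated} $\natyhat$'s reproduces the trajectory of $M_0$ against the true $\naty$'s up to $\BigOh{\epsghat^2}$ (the leading $\BigOh{\epsghat}$ term is absorbed into $\widehat M_0$ itself, at the price of doubling length and norm). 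This lifted comparator, together with an inexact/biased-gradient version of the strongly convex OGD-with-memory analysis (which costs the strong-convexity constant a constant factor, hence the $12/(\alpha t)$), is what delivers the genuine $\BigOh{T\epsghat^2}$ dependence and the final $\sqrt{T}$ rate. Your Taylor-expansion-at-$M^\star$ argument does not supply this mechanism.
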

  \iftoggle{colt}
  {Due to space limitiations, examples for the LQR and LQG settings are deferred to \Cref{ssec:lqr_lqg}, and concluding remarks are provided in \Cref{sec:conclusion}.
  }
  {

We now demonstrate how our results specialize to the LQR and LQG settings:
\begin{example}[LQR]\label{exm:lqr} In the LQR setting, the observable state $\matx_t$ evolves as $\matx_{t+1}=\Ast \matx_t +\Bst\matu_t + \matw_t$, where $\matw_t\sim\mathcal{N}(0,\sigw^2)$, and the associated losses are fixed quadratics $l(\matx,\matu) = \matx^\top Q\matx + \matu^\top R\matu$. The optimal control\footnote{In strict terms, this is only true for the infinite horizon case. However, even in the finite horizon setting, such a control law (utilizing the infinite horizon controller) is at most $\log T$ sub-optimal additively.} is expressible as $\matu_t = -K\matx_t$ (trivially an LDC). Our framework realizes this setting by choosing $\Cst = I$ and $\sige^2 = 0$ (observations are noiselss). The strong convexity parameter is then $\alpha_f = \frac{\sigw^2}{(1+\|\Ast\|_{\op}^2)^2}\alpha_{loss}$, which degrades with the norm of $\Ast$, but does not vanish even as $\Ast$ becomes unstable. For LQR, Theorem~\ref{thm:fast_rate_unknown} guarantees a regret of $\tilde{O}(\sqrt{T})$ matching the previous results \citep{cohen2019learning,mania2019certainty}\iftoggle{colt}{.}{; the latter too require strong convexity of the loss functions in addition to the losses being quadratic.}
  \end{example}
  
   \begin{example}[LQG]\label{exm:lqg} In the LQG setting, the state evolves as Equation~\ref{eq:LDS_system}, where $\matw_t\sim\mathcal{N}(0,\sigw^2),\mate_t\sim\mathcal{N}(0,\sige^2)$, and the associated losses are fixed quadratics $l(\maty,\matu) = \maty^\top Q\maty + \matu^\top R\matu$. The optimal control for a known system may be obtained via the separation principle~\citep{bertsekas2005dynamic}, which involves the applying the LQR controller on a latent-state estimate $\matxhat_t$ obtained via Kalman filtering. This can be expressed as (see e.g. \citet[Eq.13]{mania2019certainty})
   \begin{align*}
       \matxhat_{t+1} = \Ast \matxhat_{t}  + \Bst \matu_t + L(\maty_{t}-\Cst \matx_t); \qquad
       \matu_t = K\matxhat_t
   \end{align*}
   Hence, $\matxhat_{t+1}  = A_{\pi}\matxhat_t$ for $\Api = \Ast + \Bst K - L \Cst$. This yields an LDC with $D_{\pi} = 0$, and $B_{\pi} = L$ and $C_{\pi} = K$. For an unknown LQG system, Theorem~\ref{thm:fast_rate_unknown} guarantees a regret of $\tilde{O}(\sqrt{T})$.
  \end{example}
  
  We remark both of the above examples can be extended to the setting where $\Gst$ may be unstable, but is placed in feedback with a known stabilizing controller (\Cref{a:stablize}) via Theorems~\Cref{thm:fast_rate_known_stablz,thm:fast_rate_unknown_stablz}; assumption of such a stabilizing control is standard in the LQR setting. We note that for general partially-observed stabilized settings, the strong convexity modulus is somewhat more opaque, but still yields $\BigOhTil{\sqrt{T}}$ regret asymptotically.

  \subsection{Extensions}
  Our framework easily admits many extensions, which we sketch here:
  \begin{itemize}
  	\item \textbf{Functions of ``histories'':} Our OCO-with-memory framework can easily be extended to accomodate losses $\ell_t$ which depend on histories of past outputs and inputs; that is, loss functions of the form $\ell_t(\maty_{t:t-\tau},\matu_{t:t-\tau})$ for some fixed $\tau \in \mathbb{N}$. Here, we would require that $\ell_t$ satisfy appropriate Lipschitz and quadratic growth properties (\Cref{asm:lip}), and the ``unary specialization''  $\ell_t(y,y,\dots,y,u,u,\dots,u)$ is convex. Functions of past histories can be used to capture notions like costs that depend on rates of change:  for example if $\maty_t$ is the position of the system, $\ell_t(\maty_t,\maty_{t-1}) = \|\maty_t - \maty_{t-1}\|^2$ penalizes instanteous velocity. 
  	\item \textbf{Combining Open and Closed Loop Control Policies:} While our theoretical guarantees  consider a benchmark of inputs $\matu^{\pi}_t$ selected from a closed-loop LDCs (\Cref{def:lin_output_feedback}), we can also allow for open-loop components as well. For example, for a fixed $k \in \K$, and fixed functions $\Psi_1,\dots,\Psi_k$  of $t \in [T]$, we  can compete with policies of the form $\matu^{\pi}_t + \sum_{i=1}^k \alpha_i \Psi_i(t) $,  where $\matu^{\pi}_t$ is dictacted by an LDC $\pi$, and $\alpha_1,\dots,\alpha_n$ are arbitrary (though boudned) coefficients. In particular, we can compete with the superpositions of LDC controllers and finite sums of open loop sine and cosine inputs. The addition of open loop input may be useful for certain tasks, like tracking a reference signal. 
  	\item \textbf{Non-linear features:} In \Cref{thm:policy_approximation}, we show that \drc{} controllers $M$ are essentially in one-to-one correspondence with LDC controllers $\pi$. However, rather than selecting inputs $\matu^M_t := \sum_{i=0}^{m-1} M^{[i]}\naty_{t-i}$, we can in fact select \emph{non-linear features} $\matu^{M;\Psi}_t := \sum_{i=0}^{m-1} M^{[i]}\Psi_i(\naty_{t-i},t)$ where $\Psi_1(\cdot,\cdot),\dots,\Psi_m(\cdot,\cdot)$ are any fixed, potentially non-linear features maps which themselves may vary with $t$. These feature maps can potentially provide richer controller policies, which in practice may lead to better performance on certain control tasks (depending on the structure of the losses and noise). In particular, this can be used as part of a pipeline where first useful control features are learned via another procedure, such as a deep neural network.
  \end{itemize} }

	\input{\bodypath 5-analysis}
	\input{\bodypath 6-estimation.tex}
	\input{\bodypath 7-strong_convex}
	\input{\bodypath 8-str_convex_unknown}
	\section{Concluding Remarks\label{sec:conclusion}}
	\input{\bodypath 9-conclusions.tex}
	\input{\bodypath acks.tex}

\clearpage
\bibliographystyle{plainnat}
\bibliography{main}
\clearpage

\tableofcontents
\addappheadtotoc
\appendix

	\input{\apppath A_appendix_org_notation}
	\input{\apppath B1_app_past_work_in_depth}
	\input{\apppath B2_app_lower_bound}
	\input{\apppath C_app_youla_controlled}
	\input{\apppath D_app_weak_convex}
	\input{\apppath E_app_str_convex_regret}
	\input{\apppath F_app_checking_strong_convexity}
	\input{\apppath G_app_delayed_GD}

\end{document}